\documentclass{article}
\usepackage{iclr2027_conference,times}

\usepackage{amsmath,amsfonts,bm}

\def\eqref#1{equation~\ref{#1}}

\def\1{\bm{1}}

\DeclareMathAlphabet{\mathsfit}{\encodingdefault}{\sfdefault}{m}{sl}
\SetMathAlphabet{\mathsfit}{bold}{\encodingdefault}{\sfdefault}{bx}{n}

\usepackage{hyperref}
\hypersetup{hidelinks}
\usepackage{url}
\usepackage{graphicx}
\usepackage{booktabs}
\usepackage{amsmath}
\usepackage{amssymb}
\usepackage{array}
\usepackage{xspace}
\usepackage{algorithm}
\usepackage{algpseudocode}
\usepackage{multirow}
\newtheorem{proposition}{Proposition}

\graphicspath{{figures/}}

\title{BRIDGE: Selective Boundary Repair with Dependency Graph Evidence in Multi-Agent Systems}

\iclrfinalcopy

\author{
\makebox[\textwidth][c]{%
\begin{tabular}{c}
Encheng Su\textsuperscript{1, 10, *}, Pengze Li\textsuperscript{2, 10,*}, Ziheng Hua\textsuperscript{2,*}, Qiucheng Yu\textsuperscript{3}, Jianyu Wu\textsuperscript{4, 10}\\
Chen Tang\textsuperscript{5}, Yizhou Wang\textsuperscript{5}, Lintao Wang\textsuperscript{6}, Junjia Xiang\textsuperscript{7}, Xinzhu Ma\textsuperscript{8}\\
Xi Chen\textsuperscript{9}, Shixiang Tang\textsuperscript{5}, Houqiang Li\textsuperscript{1}, Xi Chen\textsuperscript{2\textdagger}, Aoran Wang\textsuperscript{10\textdagger}\\[2pt]
\small
\textsuperscript{1}University of Science and Technology of China \quad \textsuperscript{2}Fudan University \quad \\
\small
\textsuperscript{3}City University of Hong Kong \quad 
\textsuperscript{4}Shanghai Jiao Tong University \quad \\
\small
\textsuperscript{5}The Chinese University of Hong Kong
\textsuperscript{6}The University of Sydney \quad \\
\small
\textsuperscript{7}Harbin Institute of Technology \quad \textsuperscript{8}Beihang University\\
\small
\textsuperscript{9}East China Normal University \quad \textsuperscript{10}Shanghai Artificial Intelligence Laboratory\\ \quad 
\normalfont
\textsuperscript{*}Equal contribution
\quad
\textsuperscript{\textdagger}Corresponding authors\quad 
\end{tabular}
}
}

\newcommand{\method}{BRIDGE\xspace}
\newcommand{\dataset}{HandoffProbe\xspace}

\begin{document}

\maketitle
\pagestyle{empty}      
\thispagestyle{empty}  

\begin{abstract}
Multi-agent collaboration has emerged as a critical paradigm for tackling complex, long-horizon tasks; however, while coordination protocols continue to advance, systemic failures at the inter-agent collaboration interface remain a pervasive bottleneck. In collaborative workflows, systems frequently suffer from \emph{broken handoffs}---where critical task context is omitted, misrouted, or ignored despite each agent emitting locally plausible responses. To address this challenge, we present \method, a verifiable runtime control framework that treats handoff repair as selective intervention over explicit dependency graphs. Central to \method is an integrated fault-attribution and calibration pipeline, \dataset, which decouples execution dependencies and injects counterfactual handoff perturbations to provide empirical supervision for runtime intervention. Leveraging this diagnostic foundation, \method dynamically compiles observable interaction traces into a provenance graph to pinpoint unresolved dependencies, without accessing private agent states. It then evaluates candidate repairs through an empirical gate that weighs expected gains against disruption risks, and executes repairs only when formal local postconditions hold. Extensive evaluations across demanding benchmarks demonstrate that \method boosts task success by up to 12.3 absolute percentage points on SILO-BENCH and 11.1 on TeamBench. Across both backbones, \method maintains strict cost-effectiveness, adding as little as 2.0\% marginal token overhead. Furthermore, on 562 handoff-relevant traces from the MAST benchmark, \method achieves 69.8\% F1 in a single pass, establishing a reliable operational boundary for verifiable coordination repair.
\end{abstract}


\section{Introduction}
\label{sec:intro}

Large language model (LLM) architectures are increasingly deployed as multi-agent collectives where specialized agents assume complementary roles, such as planners, executors, and verifiers~\citep{hong2024metagpt,wu2024autogen,li2023camel}. In these collaborative workflows, task execution fundamentally relies on directional information flow across agent interfaces, where the synthesized output of an upstream agent constitutes the execution context of a downstream recipient. We define the transfer of such task-critical state representations and intermediate outputs as an \emph{inter-agent handoff}. In practice, multi-agent collectives frequently succumb to systemic \emph{broken handoffs}: critical dependency constraints are dropped, misrouted, or received but ignored by downstream agents. Consequently, even when every individual agent exhibits robust local competence and emits plausible responses, the multi-agent collective remains acutely vulnerable to communicative breakdowns at the collaboration boundary.

Existing approaches attempt to mitigate coordination errors through task decomposition and allocation~\citep{li2025aop}, multi-order communication and message consolidation~\citep{guan2026moc}, or post-hoc trace attribution~\citep{cemri2025mast,zhang2025whowhen}. However, static protocols cannot accommodate dynamic runtime failures, while post-mortem diagnostics arrive too late to prevent cascading errors. When attempting runtime remediation, conventional strategies such as unconstrained query clarification~\citep{lin2026agentask} and unguided self-refinement~\citep{madaan2023selfrefine} introduce severe side effects: an upstream omission may be non-essential to downstream success, and unwarranted re-prompting frequently corrupts valid intermediate trajectory states while multiplying latency and token overhead. Fundamentally, treating runtime intervention as an unguided generative task fails because it lacks two guarantees: empirical awareness of whether an intervention provides net benefit, and verifiable assurance that the missing dependency has actually been supplied.

To resolve this dilemma, we present an integrated framework that decouples offline empirical calibration from online verifiable regulation: \textbf{\dataset} and \textbf{\method}. Because live executions provide no immediate signals regarding whether an intervention will help or harm, \dataset first acts as an offline simulation engine. As illustrated in Figure~\ref{fig:overview}, \dataset systematically perturbs historical execution traces, injects typical communication faults, and compares end-to-end task outcomes with and without intervention to learn an empirical prior of when an intervention is genuinely beneficial. At runtime, \method acts as the online controller. It dynamically translates public agent interactions into an explicit \emph{provenance graph} that exposes missing dependencies as visible gaps. Before taking action, \method consults the empirical priors learned by \dataset via a calibrated utility gate to decide whether intervention is warranted. If approved, \method verifies that the generated repair successfully resolves the missing dependency before committing the updated message downstream; otherwise, it safely rolls back to preserve the original state.

Extensive evaluations confirm that our framework establishes new state-of-the-art performance across demanding multi-agent benchmarks, including SILO-BENCH and TeamBench~\citep{zhang2026silo,kim2026teambench}. On SILO-BENCH, our system boosts task success from 63.3\% to 75.6\% with DeepSeek-V4.1-flash and from 68.9\% to 77.8\% with GPT-5.6-Terra. On TeamBench, it improves success rates by up to 11.1 absolute percentage points. Crucially, these gains are achieved with minimal computational footprint---incurring only 2.0\% marginal token overhead on GPT-5.6-Terra and an average of 5.3\% on TeamBench. Furthermore, on 562 handoff-relevant traces from the MAST benchmark, our single-call graph diagnosis achieves 69.8\% F1 on GPT-5.6-Terra while saving 32\%--42\% in token costs compared to multi-stage baselines.

Our core contributions are threefold:
\begin{itemize}
    \item We formalize multi-agent handoff repair as a verifiable state-regulation problem over an explicit provenance graph. By tracking directional information dependencies, \method transforms subtle conversational failures into observable graph gaps, cleanly decoupling structural dependency tracking from semantic generation.
    \item We develop \dataset, an offline simulation pipeline that pairs counterfactual trajectory perturbations to learn empirical intervention priors. Our analysis reveals two key insights: 87.5\% of unselected repairs waste tokens without improving task success, and interventions succeed only when strictly aligned with designated failure modes---mismatches often corrupt valid intermediate states.
    \item We introduce a transactional execution mechanism that explicitly checks whether intended repairs have taken effect, guaranteeing that unverified modifications roll back safely without corrupting execution states. \method achieves new state-of-the-art performance across diverse benchmarks with negligible token overhead (as low as 2.0\%), delivering both high reliability and efficiency.
\end{itemize}

\begin{figure}[t]
\centering
\includegraphics[width=\linewidth]{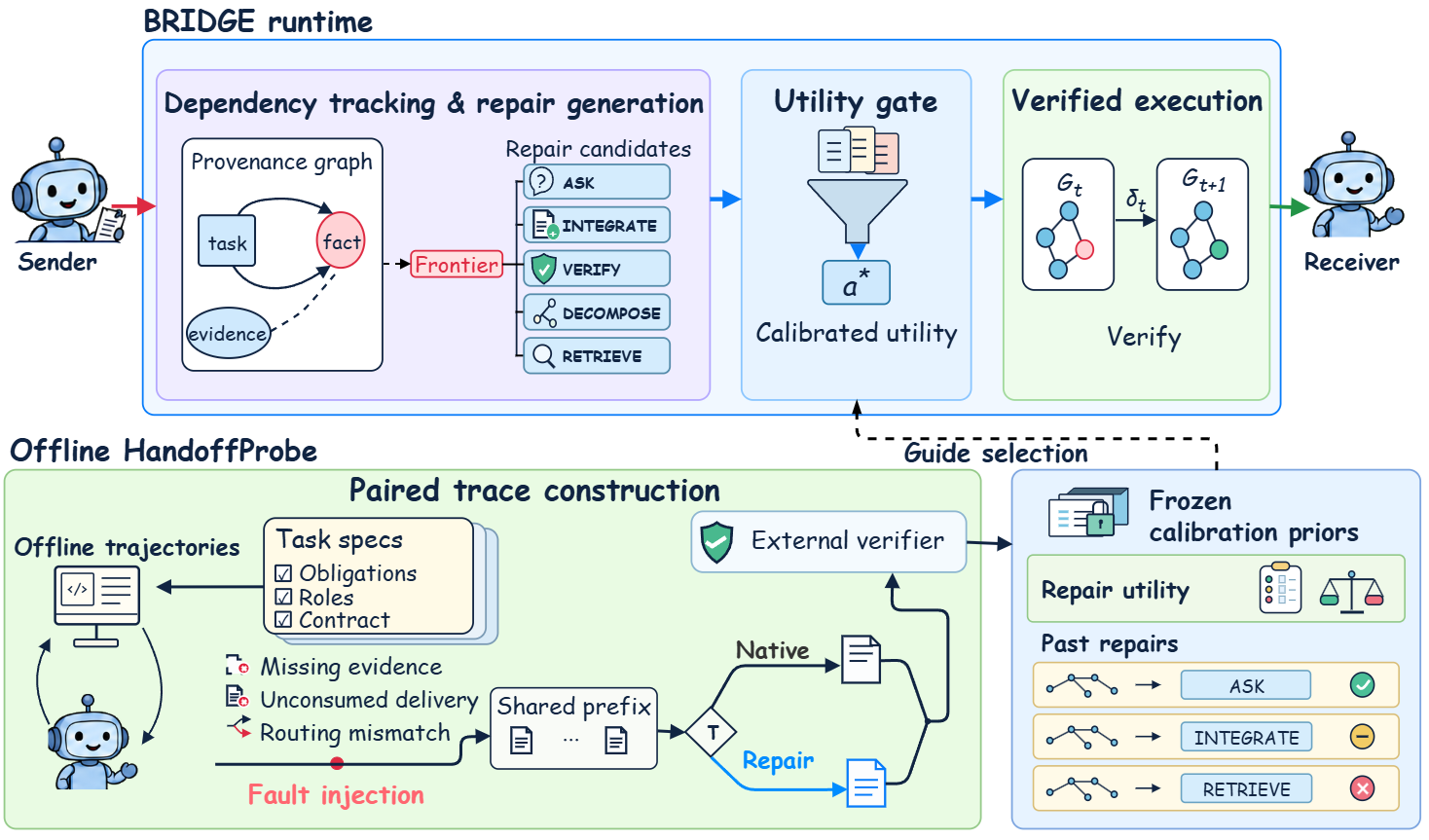}
\caption{Overview of \dataset calibration and \method runtime control. \dataset injects handoff faults into offline trajectories and branches a shared faulted prefix into native and repaired continuations. An external verifier evaluates final task success for each continuation, and the paired outcomes are aggregated into frozen calibration priors. At runtime, \method constructs a provenance graph, identifies unresolved frontiers, and uses the frozen priors to gate candidate repairs.}
\label{fig:overview}
\end{figure}


\section{Related Work}
\label{sec:related}

\paragraph{Multi-Agent Coordination and Communication.}
Multi-agent systems coordinate role-specialized agents through task decomposition and message passing. Recent efforts predominantly focus on structuring task allocation and communication topologies. For instance, Agent-Oriented Planning refines subtask decomposition to balance workloads~\citep{li2025aop}, AgentPrune prunes redundant communication links to reduce message noise~\citep{zhang2024agentprune}, MOC aggregates multi-hop interactions into structured contexts~\citep{guan2026moc}, and Routed Graph Handoff dynamically switches between structured dependency graphs and free-form text messages~\citep{banerjee2026routed}. While these methods optimize the routing paths of inter-agent messages, they assume the underlying communication is intrinsically dependable and cannot track at runtime whether delegated information is correctly consumed by downstream agents.

\paragraph{Failure Attribution and Robustness Evaluation.}
Execution-trace analyses demonstrate that multi-agent failures typically stem from coordination misalignments that cascade across agents. MAST provides a taxonomy of inter-agent misalignment and verification failures~\citep{cemri2025mast}; Who\&When formulates automated attribution to pinpoint which agent and step triggered a breakdown~\citep{zhang2025whowhen}; and MAS-FIRE evaluates architecture robustness by deliberately injecting prompt and routing faults~\citep{jia2026masfire}. However, these approaches remain diagnostic or benchmark-oriented. In contrast to passive post-mortem attribution, our \dataset module actively simulates handoff faults on execution traces and contrasts task outcomes, directly measuring the empirical utility of candidate interventions to guide proactive runtime repair.

\paragraph{Runtime Intervention, Verification, and Selective Decisions.}
To prevent early errors from propagating across agents, recent studies introduce dynamic interventions during execution. Common strategies rely on self-refinement loops~\citep{madaan2023selfrefine}, query clarification~\citep{lin2026agentask}, or predefined validation functions over intermediate subtasks~\citep{xu2026verimap}. In parallel, experience-reuse methods retrieve past workflows and interaction patterns to guide agent decisions~\citep{wang2023voyager,wang2024awm,zhang2025gmemory}. Structured memory systems such as S3Mem organize long interaction histories into scene–event units and route compact query-relevant evidence to a frozen reader~\citep{su2026s3mem}. However, existing intervention methods often intervene indiscriminately, risking disruptions to valid executions, and rarely check whether a modification actually resolves the issue. Inspired by selective decision frameworks~\citep{geifman2019selectivenet}, BRIDGE intervenes only when a repair is estimated to be genuinely helpful and releases the modified message only after confirming that the missing information is in place, avoiding unneeded interference and degraded performance.


\section{Method}
\label{sec:method}

\method treats handoff repair as a runtime decision problem with explicit verification. As illustrated in Figure~\ref{fig:overview}, it separates offline utility calibration (\dataset) from online execution. At runtime, the controller tracks execution dependencies, generates feasible repair actions, filters them using calibrated empirical estimates, and commits a repair only after verifying its graph-level postcondition.

\subsection{Problem Formulation}
\label{sec:problem}
Consider a multi-agent system where role-specialized agents collaborate via messages and tool invocations governed by a shared execution contract $(O, R, \mathcal V)$, specifying obligations $O$, assigned roles $R$, and operational constraints $\mathcal V$. At any agent handoff, the controller observes the public execution history $e_{1:T}$. The objective is to decide whether to forward the native handoff message $m^{\mathrm{nat}}$ or replace it with a verified repair action $a$. 

Each candidate repair $a$ is constrained by an execution precondition $\mathrm{pre}(a)$ and an intended postcondition $\mathrm{post}(a)$. The controller first filters candidates by structural feasibility and then selects an action whose expected benefit justifies the risk of disruption. Figure~\ref{fig:overview_workflow} illustrates this process for a missing-evidence frontier.

\begin{figure}[t]
\centering
\includegraphics[width=\linewidth]{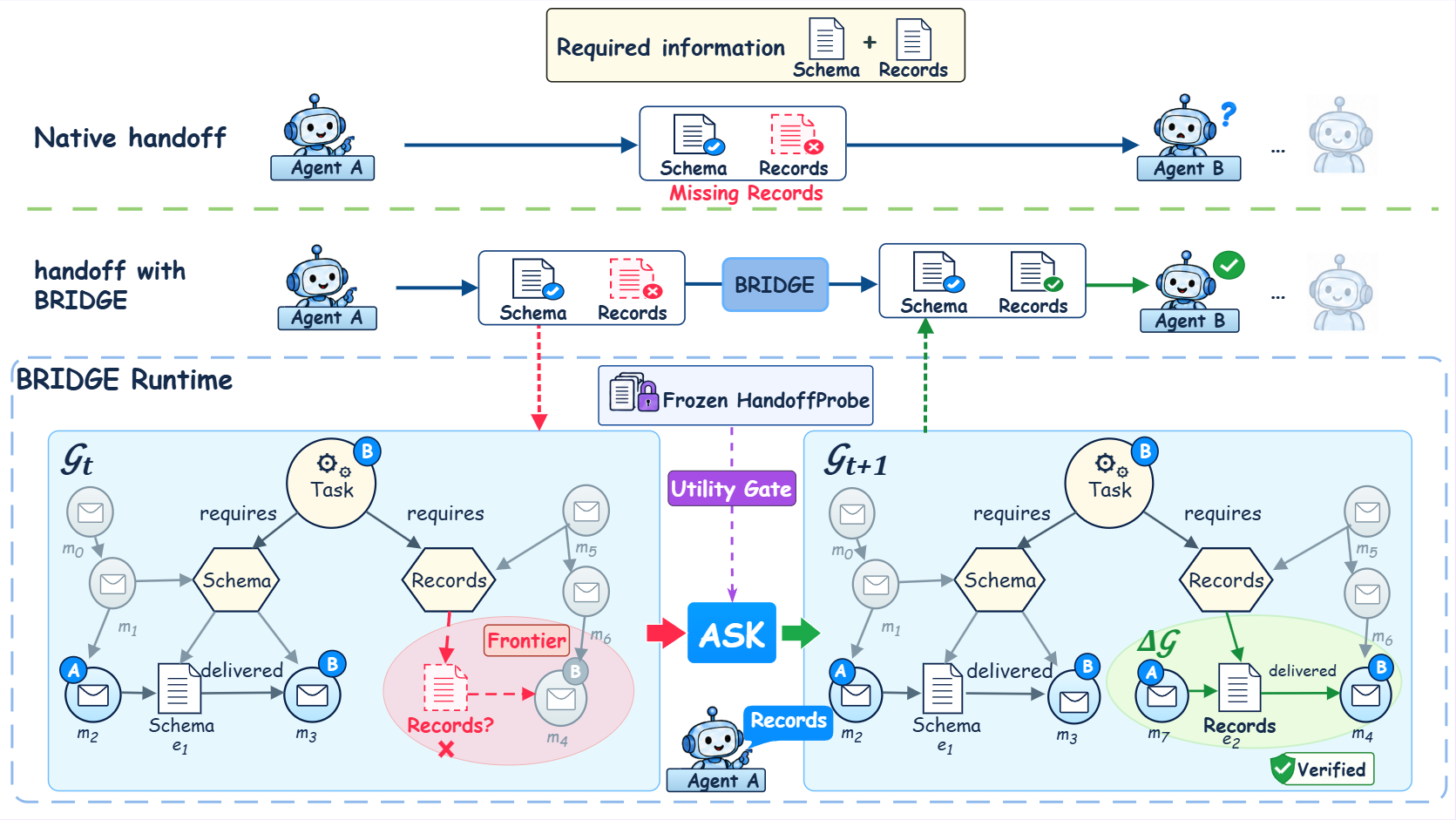}
\caption{Illustration of a missing-evidence repair. Agent A omits Records from its handoff to Agent B; \method retrieves the missing item via \texttt{ASK} and releases the revised handoff only after verifying its delivery in the provenance graph.}
\label{fig:overview_workflow}
\end{figure}

\subsection{Frontier Detection and Candidate Generation}
\label{sec:frontier_generation}
To locate coordination breakdowns, \method maintains a dynamic provenance graph $\mathcal G_t=(\mathcal N_t,\mathcal E_t)$, where nodes represent messages, tool observations, and contract obligations, and directed edges denote data production, routing, and consumption.

Let $q = (\mathrm{id}, \text{desc}, \mathrm{owner}) \in \mathcal Q(O)$ denote a contract-required fact specification, where $\mathrm{owner} \in R$ indicates the producing role. The controller tracks delivered facts $\mathcal D_{\mathrm{delivered}}$, consumed facts $\mathcal D_{\mathrm{consumed}}$, and observed public traces $\mathcal D_{\mathrm{observed}}$. A communication gap emerges when a required fact is unmet, defining the frontier set $\mathcal F = \mathcal F_{\mathrm{use}} \cup \mathcal F_{\mathrm{abs}} \cup \mathcal F_{\mathrm{scope}}$:
\begin{equation}
q \in \begin{cases}
\mathcal F_{\mathrm{use}}, & q \in \mathcal D_{\mathrm{delivered}} \setminus \mathcal D_{\mathrm{consumed}} \quad \text{(unconsumed delivery)}, \\
\mathcal F_{\mathrm{abs}}, & q \notin \mathcal D_{\mathrm{observed}} \quad \text{(missing evidence)}, \\
\mathcal F_{\mathrm{scope}}, & q \in \mathcal D_{\mathrm{delivered}}(r) \text{ with } r \ne \mathrm{dest}(q) \quad \text{(routing mismatch)}.
\end{cases}
\label{eq:frontiers}
\end{equation}
The destination $\mathrm{dest}(q)$ is dynamically resolved from the contract and role assignments. These frontier conditions need not be mutually exclusive; the controller prioritizes the earliest unmet dependency under the graph partial order, $f^* = \min_{\prec} \mathcal F$. Ties among concurrent minimal frontiers are resolved by contract obligation order and causal graph depth.

\begin{table}[!t]
\caption{Domain-agnostic repair actions, prerequisites, and expected state changes.}
\label{tab:actions}
\centering\small
\setlength{\tabcolsep}{5pt}
\renewcommand{\arraystretch}{1.1}
\begin{tabular}{lp{0.36\linewidth}p{0.37\linewidth}}
\toprule
Action Primitive & Prerequisite Condition & Intended State Change \\
\midrule
\texttt{ASK} & Identified owner and valid query & Target evidence acquired from sender \\
\texttt{INTEGRATE} & Available evidence in recipient scope & Target evidence woven into outgoing message \\
\texttt{VERIFY} & Declared contract constraint & Verification certificate recorded in graph \\
\texttt{DECOMPOSE} & Complex unmet obligation & Message rewritten using visible local sources \\
\texttt{RETRIEVE} & Accessible local or public state & Missing context fetched into recipient scope \\
\bottomrule
\end{tabular}
\end{table}

To resolve $f^*$, \method instantiates candidate repairs from a compact set of domain-agnostic primitives defined over communication dependencies and accessible execution states (Table~\ref{tab:actions}). A candidate $a$ is eligible only when its structural preconditions and executing role capabilities hold:
\begin{equation}
\begin{aligned}
\mathrm{eligible}(a) \Longleftrightarrow \; & \mathrm{pre}(a) \subseteq \mathrm{facts}(\mathcal G_t) \\
& \wedge \; \mathrm{cap}_{\mathrm{req}}(a) \subseteq \mathrm{cap}(\mathrm{agent}).
\end{aligned}
\label{eq:valid}
\end{equation}

If missing information exists solely within upstream internal states, \texttt{RETRIEVE} is ineligible, forcing communicative resolution via \texttt{ASK}.

\subsection{\dataset: Offline Utility Calibration}
\label{sec:dataset}
Feasibility confirms structural admissibility but does not guarantee downstream utility. To avoid disruptive interruptions, \dataset provides empirical statistics derived from controlled counterfactual paired rollouts.

\paragraph{Paired Rollouts.}
At an observed handoff boundary $\tau$ with execution history prefix $h_\tau$, benchmark trajectories introduce controlled communication faults. Execution then evaluates two parallel continuations:
\begin{equation}
x^{\mathrm{branch}} = \operatorname{exec}\bigl(h_\tau \mid \text{branch}\bigr), \quad \text{for } \text{branch} \in \{\mathrm{nat}, \mathrm{rep}\},
\label{eq:rollouts}
\end{equation}
where $x^{\mathrm{nat}}$ proceeds without intervention and $x^{\mathrm{rep}}$ executes candidate repair $a$. Both rollouts run to their terminal execution states.

\paragraph{Deterministic Outcome Attribution.}
Task outcomes are verified via deterministic environment oracles $V(x) \in \{0, 1\}$ (such as official test suites and exit codes), avoiding LLM judge variance. Comparing outcomes attributes each intervention to an empirical label $\ell$:
\begin{equation}
\ell = \begin{cases}
\mathrm{helped}, & V(x^{\mathrm{rep}}) = 1 \wedge V(x^{\mathrm{nat}}) = 0, \\
\mathrm{harmed}, & V(x^{\mathrm{rep}}) = 0 \wedge V(x^{\mathrm{nat}}) = 1, \\
\mathrm{neutral}, & V(x^{\mathrm{rep}}) = V(x^{\mathrm{nat}}).
\end{cases}
\label{eq:label}
\end{equation}
Aggregating these outcomes over partitioned action--frontier pairs yields empirical benefit ($r_{\mathrm{help}}$), disruption risk ($r_{\mathrm{harm}}$), and neutrality ($r_{\mathrm{neu}}$) rates, alongside mean token cost ($\bar{c}$).

\subsection{Utility Gate Selection}
\label{sec:gate}
At runtime, the selection policy $\pi$ balances expected success against execution overhead. The base utility of an action--frontier pair is:
\begin{equation}
u(a, f^*) = w_h r_{\mathrm{help}} - w_n r_{\mathrm{neu}} - w_x r_{\mathrm{harm}} - w_c \bar{c},
\label{eq:utility}
\end{equation}
where $w_x$ penalizes disruptive failures, $w_c$ weights token overhead, and $w_n$ accounts for multi-agent latency and state perturbation in neutral executions.

A candidate repair is admitted if and only if:
\begin{equation}
\pi(a, f^*) = \mathbf{1}\!\left[u(a, f^*) > 0 \;\wedge\; r_{\mathrm{harm}} \le \epsilon \;\wedge\; |\mathcal D_{a, f^*}| \ge m_{\min}\right],
\label{eq:admission}
\end{equation}
requiring positive expected utility under bounded empirical disruption ($\le \epsilon$) and sufficient historical support ($m_{\min}$). If no candidate clears this gate, the controller bypasses intervention and forwards the native message (Algorithm~\ref{alg:runtime}, line~8).

\subsection{Verified Execution and Rollback}
\label{sec:transactor}
An admitted repair may still fail at generation time. To prevent unverified modifications from reaching downstream agents, \method executes repairs behind a verify-before-commit barrier:
\begin{equation}
\mathrm{commit} \Longleftrightarrow \operatorname{Check}(\mathrm{post}(a^*), \mathcal G_t, \delta_t) = 1.
\label{eq:commit}
\end{equation}
Here, $\operatorname{Check}(\cdot)$ checks whether the observed transition $\delta_t = \operatorname{Diff}(\mathcal G_t, \mathcal G_{t+1})$ satisfies the formal postcondition $\mathrm{post}(a^*)$, resolves the target frontier, and leaves existing dependency edges valid. 

As formalized in Algorithm~\ref{alg:runtime}, the repair is committed downstream only if verification passes; otherwise, the controller restores the graph snapshot and releases the native message $m^{\mathrm{nat}}$. Rollback applies strictly to the controller's local graph and unreleased message buffer.
\textbf{Execution Mechanism.} Graph tracking, verification, and structural repairs (\texttt{INTEGRATE}, \texttt{RETRIEVE}) are executed programmatically by the controller at zero LLM inference cost. Only communicative repairs (\texttt{ASK}) invoke an auxiliary LLM query to the sender.

\begin{algorithm}[!t]
\caption{Runtime Decision and Verification Loop}
\label{alg:runtime}
\begin{algorithmic}[1]
\Require Handoff message $m^{\mathrm{nat}}$, history $e_{1:T}$, contract $(O, R, \mathcal V)$, priors $\mathcal D_{\mathrm{calib}}$, budget $B$
\Ensure Released handoff message $m^*$
\State $\mathcal G_t \gets \textsc{UpdateGraph}(e_{1:T}, O, R, \mathcal V)$
\State $\mathcal F_t \gets \textsc{DiagnoseFrontiers}(\mathcal G_t)$ \Comment{Eq.~(\ref{eq:frontiers})}
\If{$\mathcal F_t = \emptyset$} \Return $m^{\mathrm{nat}}$ \EndIf
\State $f^* \gets \min_\prec \mathcal F_t$
\State $\mathcal A_{\mathrm{elig}} \gets \{a \mid \mathrm{eligible}(a) \wedge c(a) \le B\}$ \Comment{Eq.~(\ref{eq:valid})}
\State $\mathcal A_{\mathrm{adm}} \gets \{a \in \mathcal A_{\mathrm{elig}} \mid \pi(a, f^*) = 1\}$ \Comment{Eq.~(\ref{eq:admission})}
\If{$\mathcal A_{\mathrm{adm}} = \emptyset$} \Return $m^{\mathrm{nat}}$ \EndIf
\State Select optimal repair $a^* \gets \arg\max_{a \in \mathcal A_{\mathrm{adm}}} u(a, f^*)$
\State Checkpoint state snapshot $(\mathcal G_t, m^{\mathrm{nat}})$
\State $m^{\mathrm{rep}}, \mathcal G_{t+1} \gets \textsc{ExecuteRepair}(a^*)$
\State $\delta_t \gets \operatorname{Diff}(\mathcal G_t, \mathcal G_{t+1})$
\If{$\operatorname{Check}(\mathrm{post}(a^*), \mathcal G_t, \delta_t) = 1$} \Comment{Eq.~(\ref{eq:commit})}
    \State \Return $m^{\mathrm{rep}}$
\Else
    \State Restore graph snapshot $\mathcal G_t$; \Return $m^{\mathrm{nat}}$
\EndIf
\end{algorithmic}
\end{algorithm}


\section{Experiments}
\label{sec:experiments}
We evaluate \method across end-to-end multi-agent benchmarks, real-world execution traces, and counterfactual calibration corpora. Our experiments address four central questions: 
(1) Does \method consistently improve end-to-end task completion while maintaining minimal resource overhead? 
(2) Do calibrated priors successfully transfer zero-shot across environments? 
(3) Can graph-grounded dependency tracking accurately detect handoff failures on real execution traces under strict single-call constraints? 
(4) Why are empirical gating and transactional verification indispensable for safe runtime intervention?

\subsection{Experimental Setup}
\label{sec:exp-setup}

\paragraph{Benchmarks and Model Backbones.}
We evaluate runtime performance on two multi-agent suites: \textbf{SILO-BENCH}~\citep{zhang2026silo} and \textbf{TeamBench}~\citep{kim2026teambench}. SILO-BENCH assesses collaborative information exchange over independent rollouts, while TeamBench tests workflows requiring tight role coordination and multi-step reasoning. To examine cross-model generalizability, all experiments use two state-of-the-art LLM backbones: DeepSeek-V4.1-flash and GPT-5.6-Terra.

\paragraph{Comparative Baselines.}
We compare \method against unassisted \textbf{Native} execution and four multi-agent coordination paradigms. Specifically, we evaluate communication-oriented approaches: \textbf{AgentAsk}~\citep{lin2026agentask}, which queries peers for clarification during ambiguous states, and \textbf{Self-Refine}~\citep{madaan2023selfrefine}, which prompts agents to iteratively refine handoff messages. Furthermore, we benchmark against structured coordination and verification systems: \textbf{MOC}~\citep{guan2026moc}, which aggregates multi-hop interactions into structured summaries, and \textbf{VeriMAP}~\citep{xu2026verimap}, which validates intermediate subgoals against explicit formal constraints.

\paragraph{Evaluation Metrics and Token Accounting.}
Primary task performance is evaluated via \textbf{Full Success / Pass Rate} (percentage of trajectories meeting all verification conditions) and \textbf{Partial Score} (average progress credit). For rigorous efficiency benchmarking, comprehensive token accounting tracks end-to-end prompt and completion tokens across all agents and controllers throughout the trajectory. For diagnostic evaluations, we record per-trace token footprints and model invocations per pass.

\paragraph{Diagnostic Traces on MAST.}
To assess real-world fault diagnosis prior to repair, we evaluate methods on a curated, handoff-focused slice of 562 traces from the \textbf{MAST} corpus~\citep{cemri2025mast} (detailed in Appendix~\ref{app:mast-construction}).

\subsection{End-to-End Task Success and Resource Efficiency}
\label{sec:main-results}

\begin{table}[!htbp]
\caption{End-to-end collaborative performance on TeamBench and SILO-BENCH (\%). Success denotes full task acceptance; Partial denotes benchmark progress score. Best results are in \textbf{bold}; second-best results are \underline{underlined}.}
\label{tab:main}
\centering\footnotesize
\setlength{\tabcolsep}{5pt}
\renewcommand{\arraystretch}{1.12}
\begin{tabular*}{\linewidth}{@{\extracolsep{\fill}}lcccc@{}}
\toprule
 & \multicolumn{2}{c}{TeamBench} & \multicolumn{2}{c}{SILO-BENCH} \\
\cmidrule(lr){2-3}\cmidrule(l){4-5}
Method & Full Pass (\%) $\uparrow$ & Partial (\%) $\uparrow$ & Full Success (\%) $\uparrow$ & Partial (\%) $\uparrow$ \\
\midrule
\multicolumn{5}{l}{\emph{DeepSeek-V4.1-flash}} \\
Native & 48.9 & 81.3 & $63.3\pm3.3$ & 65.8 \\
AgentAsk & 53.3 & 84.3 & $61.1\pm3.3$ & 78.9 \\
Self-Refine & 51.1 & 83.7 & $63.3\pm1.9$ & 78.9 \\
MOC & \underline{54.4} & \underline{84.4} & $\underline{65.6}\pm1.9$ & \underline{82.2} \\
VeriMAP & 51.1 & 83.6 & $61.1\pm3.3$ & 78.9 \\
\textbf{\method (Ours)} & \textbf{60.0} & \textbf{87.8} & $\textbf{75.6}\pm3.8$ & \textbf{83.4} \\
\midrule
\multicolumn{5}{l}{\emph{GPT-5.6-Terra}} \\
Native & 46.7 & 79.6 & $\underline{68.9}\pm4.0$ & 75.3 \\
AgentAsk & \underline{48.9} & 80.9 & $65.6\pm3.3$ & 81.1 \\
Self-Refine & 47.8 & 79.6 & $64.4\pm1.9$ & 80.6 \\
MOC & \underline{48.9} & \underline{81.3} & $63.3\pm1.9$ & 78.9 \\
VeriMAP & 47.8 & 79.7 & $\underline{68.9}\pm0.0$ & \underline{83.3} \\
\textbf{\method (Ours)} & \textbf{52.2} & \textbf{84.7} & $\textbf{77.8}\pm1.1$ & \textbf{87.5} \\
\bottomrule
\end{tabular*}
\end{table}

Table~\ref{tab:main} presents end-to-end task success. \method establishes new state-of-the-art results across both benchmarks and model backbones, outperforming unassisted Native execution by 5.5 to 12.3 percentage points. On SILO-BENCH, \method achieves substantial, statistically significant margins over Native: +12.3 percentage points on DeepSeek-V4.1-flash ($p=0.027$, 16 wins vs.\ 5 losses) and +8.9 percentage points on GPT-5.6-Terra ($p=0.0007$, 36 wins vs.\ 12 losses). In contrast, standard intervention baselines (e.g., Self-Refine and VeriMAP) frequently fail to surpass the Native baseline, as unconstrained interventions introduce spurious clarifications that derail valid multi-agent trajectories.

\paragraph{Runtime Token Efficiency.}
While baselines trigger severe token inflation, \method achieves its state-of-the-art gains with minimal resource footprint: adding merely \textbf{+2.0\%} marginal token overhead on GPT-5.6-Terra (25,693 vs.\ 25,198 tokens) and \textbf{+5.3\%} on TeamBench. By selectively abstaining from unhelpful repairs and rolling back unverified modifications, \method prevents runaway intervention cascades (detailed token waterfall in Appendix~\ref{app:token-overhead}).

\paragraph{Zero-Shot Cross-Benchmark Policy Transfer.}
The TeamBench results further validate the generalizability of our calibrated intervention policy. TeamBench executions directly reuse the offline calibration priors learned from synthetic corpora without exposing a single TeamBench instance during calibration. The resulting lift from 48.9\% to 60.0\% demonstrates that \method learns coordination principles that transfer zero-shot across environments. Conversely, calibration priors do not transfer across disparate model backbones, confirming that behavioral calibration must remain backbone-specific.

\subsection{Handoff Diagnosis on Real Execution Traces}
\label{sec:diagnostic-results}
Accurate runtime repair fundamentally depends on timely and sensitive upstream fault detection. We benchmark \method against Direct Prompting and adapted baselines across 562 handoff-relevant MAST execution traces.

\begin{table}[!htbp]
\caption{Diagnostic performance on 562 handoff-relevant MAST execution traces (\%). All reference baselines require two-stage prompt chains; \method operates strictly via a single diagnostic pass. Best results are in \textbf{bold}; second-best results are \underline{underlined}.}
\label{tab:mast}
\centering\small
\setlength{\tabcolsep}{8pt}
\renewcommand{\arraystretch}{1.15}
\begin{tabular}{lcccc}
\toprule
& \multicolumn{2}{c}{\emph{DeepSeek-V4.1-flash}} & \multicolumn{2}{c}{\emph{GPT-5.6-Terra}} \\
\cmidrule(lr){2-3} \cmidrule(lr){4-5}
Method & F1 (\%) $\uparrow$ & Tokens/trace $\downarrow$ & F1 (\%) $\uparrow$ & Tokens/trace $\downarrow$ \\
\midrule
Direct Prompting        & 62.5          & \textbf{6,686}    & 67.5          & \textbf{6,488} \\
AgentAsk                & 68.4          & 14,268            & 68.4          & 13,982         \\
Self-Refine             & 67.4          & 13,262            & 67.9          & 12,359         \\
MOC                     & \textbf{68.7} & 13,738            & 68.6          & 13,507         \\
VeriMAP                 & 64.2          & 13,478            & \underline{69.4} & 14,917      \\
\textbf{\method (Ours)} & \underline{68.6} & \underline{9,116} & \textbf{69.8} & \underline{8,664} \\
\bottomrule
\end{tabular}
\end{table}

\paragraph{Diagnostic Accuracy and Efficiency.}
As shown in Table~\ref{tab:mast}, \method achieves Pareto-optimal diagnostic performance across backbones. On GPT-5.6-Terra, \method attains the highest F1 score (\textbf{69.8\%}), outperforming competitive two-stage baselines like VeriMAP and MOC. On DeepSeek-V4.1-flash, it reaches a near-optimal \textbf{68.6\%} F1 (within 0.1\% of MOC). Crucially, by operating in a single diagnostic pass rather than multi-stage prompt chains, \method slashes token consumption by \textbf{32\%--42\%} (detailed breakdown and noisy-control specificity in Appendix~\ref{app:mast}).

\subsection{Ablation Studies}
\label{sec:ablation}

\paragraph{Component Necessity and Gating Dynamics.}
We ablate \method on SILO-BENCH (DeepSeek-V4.1-flash). As Table~\ref{tab:ablation} shows, structured provenance, empirical gating, and transactional verification are all indispensable.

Crucially, replacing graph provenance with a sequential text checklist (\emph{w/ Flat Checklist}) drops task success from $75.6\%$ to $68.9\%$ (only $+5.6\%$ over Native). Lacking DAG topology to track producer-consumer causality, flat checklists struggle to determine whether unconsumed requirements are blocked or fulfilled upstream, misidentifying frontiers. Furthermore, completely removing the utility gate (\emph{Always Intervene}) plunges success to $56.7\%$ ($6.6\%$ below Native, $p=0.063$), confirming that unguided interventions disrupt valid execution.

Replacing empirical gating with an LLM decision prompt (\emph{w/ LLM Self-Gate}) yields $70.0\%$, showing that offline calibration is more reliable than zero-shot assessment. Similarly, disabling transactional rollback degrades success to $56.7\%$ ($p=0.070$), as unverified repairs pollute subsequent agent context. Moving from aggregate gating to frontier-specific calibration ($73.3\%$) and adding contextual adjustments achieves the full $75.6\%$ success.

\begin{table}[!htbp]
\caption{Component ablations on SILO-BENCH (DeepSeek-V4.1-flash). Net $\Delta$ indicates margin over unassisted Native execution.}
\label{tab:ablation}
\centering\small
\setlength{\tabcolsep}{8pt}
\renewcommand{\arraystretch}{1.12}
\begin{tabular*}{\linewidth}{@{\extracolsep{\fill}}lcc@{}}
\toprule
Configuration & Success Rate (\%) $\uparrow$ & Net $\Delta$ vs. Native \\
\midrule
Native (Unassisted Host) & 63.3 & --- \\
\textbf{\method (Full System)} & \textbf{75.6} & \textbf{+12.3} \\
\quad w/ Flat Checklist (\emph{No Graph Provenance}) & 68.9 & +5.6 \\
\quad w/ LLM Self-Gate & 70.0 & +6.7 \\
\quad w/ Aggregate Utility Gate & 73.3 & +10.0 \\
\quad w/o Historical Experience Adjustment & 70.0 & +6.7 \\
\quad w/o Utility Gate (\emph{Always Intervene}) & 56.7 & -6.6 \\
\quad w/o Local Verification (\emph{No Rollback}) & 56.7 & -6.6 \\
\bottomrule
\end{tabular*}
\end{table}

\paragraph{Action-Frontier Alignment and Selectivity.}
Counterfactual analyses explain why unconstrained interventions fail. First, most communication gaps are benign: \textbf{87.5\%} of unselected repairs yield no outcome change, as downstream agents often tolerate minor omissions. Second, intervention utility demands strict action-failure alignment. For instance, context injection (\texttt{INTEGRATE}) resolves $50\%$ of unconsumed delivery errors but offers zero benefit for missing evidence. Worse, issuing clarification queries (\texttt{ASK}) for already-delivered information yields negative utility by distracting recipient reasoning. Thus, runtime coordination repair requires calibrated, selective boundaries rather than unconstrained generation.


\section{Conclusion}
\label{sec:conclusion}

Multi-agent coordination failures predominantly stem from broken dependencies across agent interfaces that remain invisible in isolated messages. In this work, we introduced \method, a verifiable runtime control framework that combines explicit provenance graph tracking, offline empirical calibration via \dataset, and transactional release verification. Across diverse multi-agent benchmarks and leading LLM backbones, \method consistently improves end-to-end task completion and diagnostic accuracy. Our findings demonstrate that effective coordination repair requires not unconstrained prompting, but grounded dependency representations and calibrated, selective intervention boundaries.

\section*{Limitations}
\method operates on the assumption that inter-agent handoff failures leave observable dependency traces within public communication and execution records. Consequently, it is less suited to workflows where communicative failures are entirely internal to an opaque agent sub-process. Furthermore, while calibration priors transfer zero-shot across environments, empirical utilities remain model-dependent, necessitating backbone-specific calibration. Future work includes expanding the frontier taxonomy to richer task modalities and automating adaptive budget allocation.

\section*{Ethics Statement}
All three evaluation resources used in this work---SILO-BENCH, TeamBench, and MAST---are public benchmark materials or public interaction records, and we use them under their respective licenses and terms. We do not access private agent reasoning states or intentionally collect personally identifying information; any released trace files should retain only the public fields needed for evaluation and should remove residual identifiers. Although \method is designed to improve reliability, dependency-repair mechanisms could be misused to automate or scale multi-agent workflows. We therefore restrict the controller to observable public state, verified local postconditions, and rollback on failed checks, and recommend access controls, audit logging, and benchmark-specific safety review before deployment.

\section*{Reproducibility Statement}
Complete implementation code, evaluation scripts, calibration artifacts, and execution manifests are provided in the supplementary material. Appendix~\ref{app:settings} records benchmark counts and denominators, generation and provider seeds, model versions, NVIDIA A100 hardware, and the exact operating-system, Python, CUDA/driver, framework, dependency, prompt, runner, and benchmark-version information needed to reproduce each run. The task lists, split manifests, and per-run denominators are retained with the same manifests. All reported numbers are derived directly from reproducible benchmark runs without cherry-picking.

\section*{AI Use Statement}

This paper was prepared with limited assistance from generative artificial intelligence tools. Specifically, AI was used to support language refinement, including improving grammar, clarity, and overall readability. All core ideas, arguments, analysis, and conclusions are the author's own. The author reviewed, verified, and edited all AI-assisted content and takes full responsibility for the final submitted version.

{\sloppy\hbadness=10000
\bibliography{references}
\bibliographystyle{iclr2027_conference}
}

\clearpage
\appendix

\section{Implementation and Representation Details}
\label{app:contract}
\label{app:notation}

The execution contract and normalized event records determine the provenance graph state. Table~\ref{tab:mapping} provides the mapping from execution events to graph entities.

\begin{table}[!htbp]
\caption{Execution events and their graph representations.}
\label{tab:mapping}
\centering\small
\setlength{\tabcolsep}{5pt}
\renewcommand{\arraystretch}{1.08}
\begin{tabular}{@{}>{\raggedright\arraybackslash}p{0.30\linewidth}>{\raggedright\arraybackslash}p{0.65\linewidth}@{}}
\toprule
Execution Event & Graph Representation \\
\midrule
Declared requirement & Obligation and fact identifier, with owner/recipient metadata \\
Message & Message/fact node with sender and recipient edges \\
Tool invocation & Action node linked to calling agent and tool schema \\
Tool execution output & Evidence node linked to producing action \\
Evidence consumption & Directed edge between fact node and consuming operation \\
Public verifier outcome & Verification relation within verifier's declared scope \\
Status update & Node state transition preserving provenance audit trail \\
\bottomrule
\end{tabular}
\end{table}

\subsection{Action Postconditions and Commit Predicates}
\label{app:certificates}
Each repair candidate declares formal graph-level postconditions derived from the targeted frontier $f^*$. The core information-transfer postconditions are formally verified against the observed graph difference $\delta_t = \operatorname{Diff}(\mathcal G_t, \mathcal G_{t+1})$:
\begin{align}
\mathrm{post}(\texttt{ASK}, q) &= \mathrm{obs}_{t+1}(q) \wedge \mathrm{arr}_{t+1}(q, \mathrm{dest}(q)), \label{eq:postask}\\
\mathrm{post}(\texttt{INTEGRATE}, q) &= \mathrm{con}_{t+1}(q, \mathrm{dest}(q)), \label{eq:postint}\\
\mathrm{post}(\texttt{VERIFY}, q) &= (\mathrm{rcv}_{t+1}(q) = \mathrm{dest}(q)). \label{eq:post}
\end{align}
In practice, the runtime predicate $\operatorname{Check}(\mathrm{post}(a^*), \mathcal G_t, \delta_t)$ in Eq.~(\ref{eq:commit}) enforces four conjuncts:
\begin{enumerate}
    \item \textbf{Frontier Resolution}: The targeted gap $f^*$ transitions to a satisfied dependency state in $\mathcal G_{t+1}$.
    \item \textbf{Scope Integrity}: The revised message references only validated evidence items currently in scope, and destination routing strictly matches $\mathrm{dest}(q)$.
    \item \textbf{Edge Preservation}: No pre-existing active dependency edge present in $\mathcal G_t$ is severed or invalidated as a side-effect.
    \item \textbf{Domain Attestation}: On software engineering tasks (TeamBench), tool actions must return an allowed execution postcondition (e.g., \texttt{command\_exit\_zero}, \texttt{tests\_passed}, \texttt{path\_written}). On distributed computing kernels (SILO-BENCH), intermediate computations must yield valid mathematical certificates.
\end{enumerate}

\subsection{Transactional Guarantees and Error Isolation}
\label{app:guarantees}
Under the ingestion and verify-before-commit protocol, the controller satisfies four foundational properties:

\begin{proposition}[Deterministic Ingestion]
\label{prop:determinism}
For a fixed contract and event sequence, deterministic ingestion yields identical provenance graph states and identified frontiers.
\end{proposition}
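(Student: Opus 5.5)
The plan is to argue by induction on the length of the event sequence, treating ingestion as a fold of a deterministic transition function over the normalized events. First I will make the implicit model explicit: the graph state $\mathcal G_t$ is produced by $\textsc{UpdateGraph}$, which starts from an initial graph $\mathcal G_0 = \Gamma(O,R,\mathcal V)$ built only from the contract (obligation and fact nodes with owner and recipient metadata, as in Table~\ref{tab:mapping}). It then applies a per-event update $\mathcal G_{k} = \Phi(\mathcal G_{k-1}, e_k; O,R,\mathcal V)$. The phrase ``deterministic ingestion'' is taken to mean three things. First, $\Gamma$ and $\Phi$ are functions, with no sampling, no LLM calls, and no dependence on wall-clock or hash-iteration order. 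Second, each event type maps to a fixed graph edit according to Table~\ref{tab:mapping}. Third, node identifiers are assigned canonically, from event index and fact id, rather than from fresh randomness.

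\textbf{Step 1 (graph equality).} The base case holds because $\Gamma$ is a function of the fixed contract, so both runs share $\mathcal G_0$. For the inductive step, assume both runs hold the same $\mathcal G_{k-1}$. They then receive the same $e_k$, since the sequence is fixed. Because $\Phi$ is a function, they produce the same $\mathcal G_k$. Hence $\mathcal G_T = \textsc{UpdateGraph}(e_{1:T},O,R,\mathcal V)$ is a function of $(e_{1:T},O,R,\mathcal V)$ alone. The one subtlety is equality up to identifiers. Canonical id assignment gives literal equality; otherwise I would state the result up to the unique label-preserving isomorphism, which is enough for what follows because every later predicate depends only on labels.

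\textbf{Step 2 (frontier equality).} The sets $\mathcal D_{\mathrm{delivered}}$, $\mathcal D_{\mathrm{consumed}}$, $\mathcal D_{\mathrm{observed}}$, and the destination function $\mathrm{dest}(q)$ are all read off $\mathcal G_T$ together with the contract. Each of the three cases of Eq.~(\ref{eq:frontiers}) is a set-theoretic predicate on these objects. So $\mathcal F = \mathcal F_{\mathrm{use}}\cup\mathcal F_{\mathrm{abs}}\cup\mathcal F_{\mathrm{scope}}$ is a function of $\mathcal G_T$, and $\textsc{DiagnoseFrontiers}$ returns the same set in both runs.

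\textbf{Step 3 (selected frontier), the main obstacle.} Selecting $f^* = \min_\prec \mathcal F$ is well defined only if the tie-breaking yields a total order on candidates. The graph partial order $\prec$ alone may leave several minimal frontiers. The stated tie-breakers, contract obligation order and then causal graph depth, must together separate any two distinct frontiers. I would close this gap with a final lexicographic key, the fact identifier $\mathrm{id}$, which is unique within $\mathcal Q(O)$. The composite key is then a strict total order determined by $\mathcal G_T$ and $O$, so $f^*$ is uniquely and identically determined in both runs. This completes the proposition.
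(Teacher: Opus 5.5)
Your proposal is correct and takes the same route as the paper, whose sketch simply appeals to the construction rules of Section~\ref{sec:frontier_generation}, by which a fixed contract and event log map to a unique DAG and hence to a unique frontier set under Eq.~(\ref{eq:frontiers}); your induction over a per-event update $\Phi$ and your reading of the frontier predicates as functions of $\mathcal G_T$ make that argument explicit. Your Step~3 goes beyond the paper's sketch, which never treats the selection of $f^*$, and the fact-id tie-breaker is an assumption you add to the method rather than something the paper supplies, though any deterministic selection rule over canonically identified nodes would serve equally well.
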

\emph{Proof sketch.} Follows directly from the topological construction rules in Section~\ref{sec:frontier_generation}, where public event logs deterministically map to unique directed acyclic graph representations.

\begin{proposition}[Bounded Action Enumeration]
\label{prop:bounded}
For bounded candidate depth and width, the set of admitted candidate actions is finite, and at most one repair is executed per boundary.
\end{proposition}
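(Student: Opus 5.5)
The proof splits into two independent claims: finiteness of the admitted set, and the fact that at most one repair runs per boundary. Both follow from the structure of Algorithm~\ref{alg:runtime} together with the definitions in Eqs.~(\ref{eq:valid}) and~(\ref{eq:admission}). No probabilistic or semantic argument is needed.

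\textbf{Finiteness.} We bound the candidate space step by step.
\begin{enumerate}
\item At a fixed boundary the graph $\mathcal G_t$ is built from the finite public history $e_{1:T}$. By Proposition~\ref{prop:determinism} it is a unique finite DAG, so $\mathcal N_t$ is finite.
\item The frontier set $\mathcal F_t$ from Eq.~(\ref{eq:frontiers}) is a subset of the finite specification set $\mathcal Q(O)$ of the contract. It is therefore finite, and $f^*=\min_\prec \mathcal F_t$ is well defined. Ties are broken deterministically by obligation order and causal depth.
\item Candidates for $f^*$ come from the five primitives in Table~\ref{tab:actions}. Each primitive is parameterized by its target fact, its owner or destination role in $R$, and the evidence items it cites from $\mathrm{facts}(\mathcal G_t)$.
\item Under the hypothesis of bounded depth $d$ (nesting of composed or decomposed sub-actions) and width $w$ (number of cited arguments), the candidate set has size at most $5\cdot |R|\cdot \sum_{k\le d}(|\mathcal N_t|^{w})^{k}$, which is finite.
\end{enumerate}
Since $\mathcal A_{\mathrm{adm}}\subseteq\mathcal A_{\mathrm{elig}}\subseteq$ this candidate set, the admitted set is finite. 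The budget filter $c(a)\le B$ and the gate $\pi$ only shrink it further.

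\textbf{At most one repair.} We trace the control flow of Algorithm~\ref{alg:runtime}.
\begin{enumerate}
\item The algorithm has no loop. It has two early returns of $m^{\mathrm{nat}}$: one when $\mathcal F_t=\emptyset$ and one when $\mathcal A_{\mathrm{adm}}=\emptyset$.
\item Otherwise it picks a single $a^*$ by $\arg\max$ of $u$. This exists because $\mathcal A_{\mathrm{adm}}$ is finite and nonempty; ties are broken by a fixed order.
\item It then calls \textsc{ExecuteRepair} exactly once. Both branches of the \textsc{Check} test return immediately, committing $m^{\mathrm{rep}}$ or rolling back to $m^{\mathrm{nat}}$.
\end{enumerate}
So every execution path through a boundary invokes \textsc{ExecuteRepair} at most once.

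\textbf{Main obstacle.} The delicate step is making ``bounded depth and width'' precise. \texttt{DECOMPOSE} could in principle recursively spawn sub-repairs, and \texttt{ASK} queries have free-text content that is not obviously finite. I would handle this by identifying candidates up to their structural signature: primitive, target $q$, role, and cited evidence node set. The generated text is treated as the output of execution, not as part of the candidate's identity. The depth and width bounds then cap the recursion of \texttt{DECOMPOSE}. I would also state explicitly that any sub-steps of a decomposed repair are executed inside the single \textsc{ExecuteRepair} transaction. This keeps the ``one repair per boundary'' clause consistent with the single \textsc{Check} and commit barrier in Eq.~(\ref{eq:commit}).
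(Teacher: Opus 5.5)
Your proposal is correct and follows essentially the paper's route: the admitted set is finite because candidates come from a bounded-depth, bounded-width pool filtered by eligibility (Eq.~(\ref{eq:valid})) and the budget $B$, and at most one repair runs because Algorithm~\ref{alg:runtime} is loop-free, with a single $\arg\max$ selection followed by a single \textsc{ExecuteRepair} call. Your version spells out what the paper's one-line sketch leaves implicit: an explicit counting bound, deterministic tie-breaking in the $\arg\max$, and treating \texttt{DECOMPOSE} sub-steps (and, by the same reasoning, the paper's one-shot \texttt{ASK} format retry) as one transaction behind the single commit check.
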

\emph{Proof sketch.} By construction in Algorithm~\ref{alg:runtime}, candidate generation is restricted to eligible actions satisfying Eq.~(\ref{eq:valid}) under budget $B$, followed by a singular $\arg\max$ selection.

\begin{proposition}[Safe Native Pass-Through]
\label{prop:abstain}
If no frontier is detected or no candidate meets the positive utility threshold, the controller forwards the native message without modification.
\end{proposition}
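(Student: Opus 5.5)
The argument is a case analysis on the control flow of Algorithm~\ref{alg:runtime}. The property to establish is that, under either hypothesis of the statement, the released message $m^*$ equals $m^{\mathrm{nat}}$ and nothing downstream changes. I will split the hypothesis into its two disjuncts, show that each one reaches an early \textbf{return} of $m^{\mathrm{nat}}$ before \textsc{ExecuteRepair} runs, and then note that no other code path could modify the outgoing message.

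\emph{Case 1: no frontier.} By Proposition~\ref{prop:determinism}, line~2 yields a well-defined set $\mathcal F_t$ determined by the contract and the event sequence, computed via Eq.~(\ref{eq:frontiers}). If no frontier is detected, then $\mathcal F_t=\emptyset$, and the guard on line~3 returns $m^{\mathrm{nat}}$ at once. Lines~4--15 never execute, so no candidate is built, no repair runs, and no snapshot is taken. Since \textsc{UpdateGraph} and \textsc{DiagnoseFrontiers} only read the event history and write the controller's local graph, the outgoing message buffer is untouched.

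\emph{Case 2: a frontier exists but no candidate meets the threshold.} Here $f^*=\min_\prec\mathcal F_t$ is defined. I read ``meets the positive utility threshold'' as ``passes the admission gate'': if $u(a,f^*)\le 0$ for every eligible $a$, then the first conjunct of Eq.~(\ref{eq:admission}) fails and $\pi(a,f^*)=0$. The same conclusion holds if the harm or support conjuncts are the ones that fail. Either way $\mathcal A_{\mathrm{adm}}=\emptyset$, and line~7 returns $m^{\mathrm{nat}}$. The only step I must check is that lines~5--6 have no side effects. Computing $\mathrm{eligible}(a)$ by Eq.~(\ref{eq:valid}) uses set inclusions over $\mathrm{facts}(\mathcal G_t)$, and computing $\pi$ reads frozen priors $\mathcal D_{\mathrm{calib}}$. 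Neither issues LLM queries nor writes messages, since only \textsc{ExecuteRepair} invokes \texttt{ASK}.

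The main obstacle is scoping rather than logic. The statement's informal ``positive utility threshold'' must be pinned to the full gate of Eq.~(\ref{eq:admission}), and I must justify that everything before line~10 is side-effect free with respect to the released message and other agents' states. For the second point I will appeal to the Execution Mechanism paragraph: evaluating priors and preconditions is purely programmatic. I will also remark, for completeness, that the one remaining path that could alter the message (a candidate is admitted but fails Check) also releases $m^{\mathrm{nat}}$ via rollback (line~15). That path lies outside this statement's hypotheses, so this remark only confirms that pass-through is robust.
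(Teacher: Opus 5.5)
Your proposal is correct and is essentially the paper's argument: the paper's proof sketch is this same case split on the two early returns of Algorithm~\ref{alg:runtime}, giving identity pass-through whenever $\mathcal F_t=\emptyset$ or $\mathcal A_{\mathrm{adm}}=\emptyset$, and your extra steps (reading the ``positive utility threshold'' as the full gate of Eq.~(\ref{eq:admission}), and checking that nothing before \textsc{ExecuteRepair} touches the outgoing message) only make explicit what the sketch takes for granted. One cosmetic difference is that the paper also numbers the \texttt{end if} lines, so the admission check you call line~7 is line~8 there, and the rollback you cite as line~15 is line~17.
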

\emph{Proof sketch.} Follows from lines~3 and~8 of Algorithm~\ref{alg:runtime}, guaranteeing identity passthrough whenever $\mathcal F_t = \emptyset$ or $\mathcal A_{\mathrm{adm}} = \emptyset$.

\begin{proposition}[Transactional Release Safety]
\label{prop:transaction}
The controller commits a modified message if and only if its declared postcondition is satisfied on the observed graph delta; otherwise, it rolls back to the native message.
\end{proposition}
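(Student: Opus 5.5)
The plan is a case analysis over the control flow of Algorithm~\ref{alg:runtime}, in the style of the sketches for Propositions~\ref{prop:bounded} and~\ref{prop:abstain}. The claim is a biconditional about what the controller releases at a boundary. I will read "commits a modified message" as "Algorithm~\ref{alg:runtime} returns $m^{\mathrm{rep}}$", and "rolls back" as "it restores the snapshot $\mathcal G_t$ and returns $m^{\mathrm{nat}}$".

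The first step is to fix the scope. The proposition only concerns executions that reach line~10 (\textsc{ExecuteRepair}), i.e.\ where $\mathcal F_t \neq \emptyset$ and $\mathcal A_{\mathrm{adm}} \neq \emptyset$, so that a unique $a^*$ is selected by the $\arg\max$ (unique by Proposition~\ref{prop:bounded}). On all other paths no modified message is ever constructed. By Proposition~\ref{prop:abstain}, the output on those paths is $m^{\mathrm{nat}}$, so they are consistent with the "only if" direction vacuously.

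On the remaining path, the argument observes that lines~11--15 form an exhaustive, mutually exclusive branch on the Boolean $\operatorname{Check}(\mathrm{post}(a^*), \mathcal G_t, \delta_t)$.
\begin{itemize}
\item If the check equals $1$, line~13 returns $m^{\mathrm{rep}}$. By Eq.~(\ref{eq:commit}) and the four conjuncts of Appendix~\ref{app:certificates}, this means $\mathrm{post}(a^*)$ holds on the observed delta $\delta_t=\operatorname{Diff}(\mathcal G_t,\mathcal G_{t+1})$.
\item If the check equals $0$, line~15 restores the checkpoint taken at line~9 and returns $m^{\mathrm{nat}}$.
\end{itemize}
Since every \textbf{return} terminates the loop, there is no other exit through which $m^{\mathrm{rep}}$ could be released. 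This gives both directions of the "if and only if". The word "otherwise" in the statement is exactly the else-branch.

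The main obstacle is pinning down that "rolls back" leaves no residue. This needs two facts.
\begin{itemize}
\item $\operatorname{Check}$ must be a total, deterministic predicate of $(\mathrm{post}(a^*),\mathcal G_t,\delta_t)$. I would take this from Proposition~\ref{prop:determinism}: $\mathcal G_{t+1}$ is obtained by deterministic ingestion of the repair's events, so $\delta_t$ is well defined, and each conjunct in Appendix~\ref{app:certificates} is a decidable graph predicate.
\item Nothing downstream can observe $m^{\mathrm{rep}}$ before line~12 is evaluated. Here I would invoke the stated scoping assumption: rollback applies to the controller's local graph and unreleased message buffer, and \textsc{ExecuteRepair} writes only into that buffer.
\end{itemize}
The second fact is a modelling assumption rather than a derived property, so I would state it explicitly as a hypothesis of the proposition. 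Side effects of an \texttt{ASK} query to the sender lie outside the controller's state and are not claimed to be undone.
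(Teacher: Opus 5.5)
Your proof is correct and is essentially the paper's own argument made explicit. The paper's sketch likewise rests on the exhaustive verify-before-commit branch of Algorithm~\ref{alg:runtime}, together with the fact that a tentative repair lives only in an in-memory staging buffer that is discarded, with $\mathcal G_t$ restored, when the check fails.

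Two small points:
\begin{itemize}
\item Your line numbers are off by two. The numbered ``end if'' lines place \textsc{ExecuteRepair} at line~12 and the commit/rollback branch at lines~14--18, which is why the paper cites lines~12--18.
\item The ``if'' direction of the biconditional requires reading ``postcondition satisfied'' as $\operatorname{Check}(\mathrm{post}(a^*),\mathcal G_t,\delta_t)=1$, exactly as in Eq.~(\ref{eq:commit}). The four conjuncts of Appendix~\ref{app:certificates} make $\operatorname{Check}$ stronger than $\mathrm{post}(a^*)$ alone, so this identification is needed; the paper's sketch makes it implicitly as well.
\end{itemize}
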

\emph{Proof sketch.} Guaranteed by the verify-before-commit barrier (Algorithm~\ref{alg:runtime}, lines~12--18). Tentative candidate execution operates strictly over an in-memory staging buffer; verification failure restores the local snapshot $\mathcal G_t$ and discards uncommitted modifications, preserving the invariant that unverified interventions never enter the public communication channel.

\paragraph{Deterministic Format Retries.} If an \texttt{ASK} query returns an unparseable response payload, the controller executes exactly \emph{one} deterministic re-prompt. If the second response remains invalid, the intervention aborts immediately, triggering graph restoration and native message release.

\section{\dataset and Policy Calibration}
\label{app:calibration}

\subsection{Calibration Corpus and Utility Parameters}
The calibration corpus comprises 200 balanced episodes: 50 clean executions and 50 for each of the three canonical fault conditions ($\mathcal F_{\mathrm{use}}$, $\mathcal F_{\mathrm{abs}}$, $\mathcal F_{\mathrm{scope}}$). Trajectories are partitioned into calibration, development, and test splits (160/20/20).

Table~\ref{tab:utility-parameters} details the base calibration parameters. The utility objective balances expected success against costs and disruption risks:
\begin{equation}
u(a, f^*) = 1.0 \, r_{\mathrm{help}} - 0.25 \, r_{\mathrm{neu}} - 2.0 \, r_{\mathrm{harm}} - 0.25 \, \bar{c}.
\end{equation}

\begin{table}[!htbp]
\caption{Base utility parameters and calibration settings.}
\label{tab:utility-parameters}
\centering\small
\setlength{\tabcolsep}{5pt}
\begin{tabular}{@{}>{\raggedright\arraybackslash}p{0.18\linewidth}>{\raggedright\arraybackslash}p{0.16\linewidth}>{\raggedright\arraybackslash}p{0.60\linewidth}@{}}
\toprule
Parameter & Value & Description \\
\midrule
$w_h$ & 1.0 & Weight on verifier-helped outcomes ($V(x^{\mathrm{rep}})=1 \wedge V(x^{\mathrm{nat}})=0$) \\
$w_n$ & 0.25 & Penalty on neutral/unchanged executions (latency and state perturbation) \\
$w_x$ & 2.0 & Penalty on verifier-harmed executions ($V(x^{\mathrm{rep}})=0 \wedge V(x^{\mathrm{nat}})=1$) \\
$w_c$ & 0.25 & Weight on normalized resource cost \\
$\epsilon$ & 0.05 & Empirical harm-rate tolerance threshold ($r_{\mathrm{harm}} \le 0.05$) \\
$m_{\min}$ & 5 & Minimum required sample support per action--frontier segment \\
Cost metric & Normalized & Single-step intervention cost baseline is normalized to $\bar{c} = 1.0$ across candidate actions \\
\bottomrule
\end{tabular}
\end{table}

\subsection{Route-Level Calibration Statistics}
\label{app:policies}
Table~\ref{tab:route-stats} summarizes the empirical calibration statistics computed across candidate routes on the calibration split.

\begin{table}[!htbp]
\caption{Empirical calibration statistics across candidate routes. Utility is computed under base parameters; admission requires positive utility, empirical harm rate $\le \epsilon$, and sample support $\ge m_{\min}$.}
\label{tab:route-stats}
\centering\footnotesize
\setlength{\tabcolsep}{5pt}
\renewcommand{\arraystretch}{1.08}
\begin{tabular*}{\linewidth}{@{\extracolsep{\fill}}llrrrrl@{}}
\toprule
Backbone & Route / Segment & Episodes & Helped & Harmed & Utility & Admitted \\
\midrule
DeepSeek-V4.1-flash & \texttt{ASK} & 52 & 8 (15.4\%) & 0 & $-0.31$ & No \\
DeepSeek-V4.1-flash & \texttt{INTEGRATE} & 25 & 12 (48.0\%) & 0 & $+0.10$ & Yes \\
\midrule
GPT-5.6-Terra & \texttt{ASK} & 40 & 10 (25.0\%) & 0 & $-0.19$ & No \\
GPT-5.6-Terra & \texttt{INTEGRATE} (aggregate) & 12 & 4 (33.3\%) & 0 & $-0.08$ & No \\
GPT-5.6-Terra & \texttt{INTEGRATE} $\times \mathcal F_{\mathrm{use}}$ & 6 & 4 (66.7\%) & 0 & $+0.33$ & Yes \\
\bottomrule
\end{tabular*}
\end{table}

\section{MAST Diagnostic Evaluation Details}
\label{app:mast}

\subsection{Benchmark Construction and Multi-Framework Stratification}
\label{app:mast-construction}
To evaluate diagnostic effectiveness under real-world collaborative diversity, we construct \texttt{handoff562} from the official MAST dataset~\citep{cemri2025mast}.

To rigorously benchmark diagnostic capability under realistic execution noise, we pair these with 281 stratified control traces. Crucially, as summarized in Table~\ref{tab:mast-breakdown}, the control traces are not artificially sanitized: 224 controls ($79.7\%$) contain non-handoff execution errors (e.g., 131 Runtime Errors, 130 Tool Failures, and 94 Model Generation Errors), while 57 traces represent clean executions. This ensures our evaluation strictly assesses a model's ability to disentangle true interface-level handoff gaps from generic execution noise across diverse multi-agent frameworks.

\begin{table}[!htbp]
\caption{Distribution of the 562 curated MAST evaluation traces across agent architectures.}
\label{tab:mast-breakdown}
\centering\small
\setlength{\tabcolsep}{12pt}
\renewcommand{\arraystretch}{1.08}
\begin{tabular}{lrrr}
\toprule
Multi-Agent Framework & Positive Traces & Stratified Controls & Total Traces \\
\midrule
MetaGPT     & 182 & 182 & 364 \\
AG2         & 46  & 46  & 92  \\
Magentic    & 43  & 43  & 86  \\
HyperAgent  & 7   & 7   & 14  \\
OpenManus   & 2   & 2   & 4   \\
AppWorld    & 1   & 1   & 2   \\
\midrule
\textbf{Total} & \textbf{281} & \textbf{281} & \textbf{562} \\
\bottomrule
\end{tabular}
\end{table}

\subsection{Diagnostic Accuracy and Single-Pass Efficiency}
\label{app:mast-results}
Table~\ref{tab:mast-detailed} provides extended diagnostic comparisons on the 562 traces. All reference baselines rely on complex two-phase prompting chains that require two separate API invocations and substantial token overhead ($12{,}300$--$14{,}900$ tokens/trace). In contrast, \method performs structured provenance tracking in a \emph{single API call}, consuming only $8{,}664$ to $9{,}116$ tokens per trace (a \textbf{32\%--42\%} computational savings).

\begin{table}[!htbp]
\caption{Diagnostic metrics on 562 MAST traces. \method operates strictly via a single inference pass, while adapted baselines require two-stage prompt chains.}
\label{tab:mast-detailed}
\centering\small
\setlength{\tabcolsep}{8pt}
\renewcommand{\arraystretch}{1.12}
\begin{tabular*}{\linewidth}{@{\extracolsep{\fill}}lcccc@{}}
\toprule
 & \multicolumn{2}{c}{DeepSeek-V4.1-flash} & \multicolumn{2}{c}{GPT-5.6-Terra} \\
\cmidrule(lr){2-3}\cmidrule(l){4-5}
Method & F1 (\%) $\uparrow$ & Tokens/trace $\downarrow$ & F1 (\%) $\uparrow$ & Tokens/trace $\downarrow$ \\
\midrule
Direct Prompting & 62.5 & \textbf{6,686} & 67.5 & \textbf{6,488} \\
AgentAsk (adapted) & 68.4 & 14,268 & 68.4 & 13,982 \\
Self-Refine (adapted) & 67.4 & 13,262 & 67.9 & 12,359 \\
MOC (adapted) & \textbf{68.7} & 13,738 & 68.6 & 13,507 \\
VeriMAP (adapted) & 64.2 & 13,478 & \underline{69.4} & 14,917 \\
\textbf{\method (Ours)} & \underline{68.6} & \underline{9,116} & \textbf{69.8} & \underline{8,664} \\
\bottomrule
\end{tabular*}
\end{table}

On GPT-5.6-Terra, \method attains the highest F1 score of \textbf{69.8\%}, outperforming all two-stage prompt-chain baselines while reducing token costs by over 5,000 tokens per trace. On DeepSeek-V4.1-flash, \method achieves a competitive \textbf{68.6\%} F1 score (matching the top baseline MOC within 0.1\%) at a fraction of the computational footprint. These results highlight that explicit graph extraction achieves state-of-the-art diagnostic discrimination in a single inference pass.

\subsection{Decision Layer Control Integrity}
\label{app:gate-audit}
The formal necessity of \method's decision layer is empirically corroborated through ablation benchmarks. Under seed-0 control rollouts on SILO-BENCH, eliminating utility admission degrades end-to-end task success by $23.3$ percentage points ($75.6\% \to 56.7\%$, $p=0.063$). Similarly, removing transactional verification causes an identical $23.3$ percentage point drop ($75.6\% \to 56.7\%$, $p=0.070$). These findings confirm that combining empirical utility filtering with rollback verification provides the indispensable operational boundary required for safe multi-agent repair.

\section{Experimental Protocols and Configuration}
\label{app:settings}

\subsection{Run Counts, Seeds, and Environment}
The reported evaluations use three benchmark resources. TeamBench uses a fixed 90-task evaluation subset; each reported model row has $n=90$ task-level evaluations. The MAST diagnostic set contains 562 traces: 281 confirmed handoff breakdowns and 281 stratified successful controls. The SILO-BENCH evaluation uses 90 task instances in three 30-instance generation groups, constructed with generation seeds $42$, $0$, and $1$. DeepSeek uses one rollout per instance ($n=90$), whereas GPT uses three provider-seed runs over the same 90 instances ($n=270$). The evaluated model versions are DeepSeek-V4.1-flash and GPT-5.6-Terra. Experiments run on NVIDIA A100 GPUs. The supplementary environment manifest records the exact operating-system, Python, CUDA/driver, framework, dependency, prompt, runner, and benchmark-version information needed to reproduce each run; the task lists, split manifests, and per-run denominators are retained with the same manifests.

\subsection{Evaluation Protocols and Verifier Harnesses}
\label{app:audit-protocol}
To eliminate variance and subjective drift from LLM-as-a-judge evaluators, Table~\ref{tab:configuration} summarizes the evaluation settings across benchmark suites. all task outcomes are evaluated using deterministic environment test harnesses $V(x) \in \{0, 1\}$,:
\begin{itemize}
    \item \textbf{SILO-BENCH}: The official test harness evaluates distributed executions programmatically: Level~I arithmetic uses exact floating-point tolerances (\texttt{math.isclose}); Level~II sets use exact element matching; Level~III state traces use Longest Common Subsequence (LCS) divergence. Built-in unlimited retry wrappers are disabled to ensure strict evaluation consistency.
    \item \textbf{TeamBench}: Tasks run within isolated container sandboxes. Success requires full passing runs across official unit tests and clean execution exit codes ($\text{exit\_code} = 0$). Internal answer templates and hidden evaluation manifests (\texttt{score.json}, \texttt{expected.json}) are excluded from agent observation scopes.
    \item \textbf{Reporting Protocol}: Main benchmark scores denote empirical mean and standard error ($\pm \text{SE}$) across independent seeded rollouts (generation seeds $42$, $0$, and $1$ for SILO-BENCH). Ablations (Table~\ref{tab:ablation}) are evaluated under a unified reference seed configuration across the full benchmark to prevent rollout variance confounding.
    \item \textbf{Zero-Shot Transfer}: In TeamBench, all calibration priors and threshold hyperparameters remain completely frozen from synthetic calibration, confirming true cross-environment transfer.
\end{itemize}

\begin{table}[!htbp]
\caption{Evaluation settings across benchmark suites.}
\label{tab:configuration}
\centering\small
\setlength{\tabcolsep}{6pt}
\renewcommand{\arraystretch}{1.08}
\begin{tabular*}{\linewidth}{@{\extracolsep{\fill}}lcccc@{}}
\toprule
 & \multicolumn{2}{c}{SILO-BENCH} & \multicolumn{2}{c}{TeamBench} \\
\cmidrule(lr){2-3}\cmidrule(l){4-5}
Specification & DeepSeek & GPT & DeepSeek & GPT \\
\midrule
Evaluation rollouts per task & 1 & 3 & 1 & 1 \\
Instances / $n$ & 90 / 90 & 90 / 270 & 90 / 90 & 90 / 90 \\
Policy configuration & Corpus-calibrated & Corpus-calibrated & Zero-shot transfer & Zero-shot transfer \\
Max actions per boundary & 1 & 1 & 1 & 1 \\
\bottomrule
\end{tabular*}
\end{table}

\subsection{Reference Baselines}
\label{app:baseline-details}
We retain each benchmark's native team topology, role assignments, communication protocol, and acceptance checks. Baseline adaptations operate strictly on handoff messages or recipient-visible context without accessing hidden answers or another agent's private state. Auxiliary LLM calls use the evaluated backbone to avoid confounding differences in model capacity.

\paragraph{Native.}
The original handoff is delivered without modification or auxiliary model calls.

\paragraph{AgentAsk.}
We evaluate the clarification mode of AgentAsk~\citep{lin2026agentask}. At an eligible outgoing handoff, an LLM reviewer decides whether clarification is needed and, if so, queries the original sender. The sender answers from its visible context, and the clarification is delivered with the handoff. Both reviewer and sender use the evaluated backbone.

\paragraph{Self-Refine.}
The original sender critiques and revises the outgoing handoff using its own model and visible context~\citep{madaan2023selfrefine}, with at most one feedback--refinement iteration per handoff.

\paragraph{MOC.}
Exposes causally preceding messages to the recipient and consolidates multi-order evidence when exceeding message budgets~\citep{guan2026moc}. Following its reported configuration, maximum communication order is set to $K=2$, evidence tolerance to $\gamma=1$, similarity margin to $\epsilon=0.1$, distillation length ratio to $\kappa=0.45$, and distillation candidates to $M=5$. Message similarity uses \texttt{all-MiniLM-L6-v2}.

\paragraph{VeriMAP.}
Adapts the planner--executor--verifier loop to individual handoff edges using locally generated contracts and verification functions~\citep{xu2026verimap}. On SILO-BENCH, limits follow reported settings (three execution attempts per plan, five planning iterations). On TeamBench, the native Planner--Executor--Verifier topology is retained, with up to three contract-generation and verification-gate attempts.

\label{app:baseline-results}
Table~\ref{tab:tb-baseline} summarizes reference baseline results on TeamBench across full-team harness evaluations.

\begin{table}[!htbp]
\caption{TeamBench reference baseline evaluations. Pass denotes benchmark pass rate (\%); Partial denotes mean partial progress score (\%).}
\label{tab:tb-baseline}
\centering\small
\setlength{\tabcolsep}{8pt}
\renewcommand{\arraystretch}{1.08}
\begin{tabular*}{0.86\linewidth}{@{\extracolsep{\fill}}lrrrr@{}}
\toprule
 & \multicolumn{2}{c}{DeepSeek-V4.1-flash} & \multicolumn{2}{c}{GPT-5.6-Terra}\\
\cmidrule(lr){2-3}\cmidrule(l){4-5}
Method & Pass (\%) & Partial (\%) & Pass (\%) & Partial (\%) \\
\midrule
Native & 48.9 & 81.3 & 46.7 & 79.6 \\
AgentAsk & 53.3 & 84.3 & 48.9 & 80.9 \\
Self-Refine & 51.1 & 83.7 & 47.8 & 79.6 \\
MOC & 54.4 & 84.4 & 48.9 & 81.3 \\
VeriMAP & 51.1 & 83.6 & 47.8 & 79.7 \\
\bottomrule
\end{tabular*}
\end{table}

\subsection{Paired Outcome Details}
Table~\ref{tab:paired-details} provides counts of paired discordant outcomes ($W/L$: \method-only win vs. Native-only win) across primary benchmarks.

\begin{table}[!htbp]
\caption{Paired discordant outcomes between \method and unassisted Native execution.}
\label{tab:paired-details}
\centering\small
\setlength{\tabcolsep}{8pt}
\renewcommand{\arraystretch}{1.08}
\begin{tabular*}{\linewidth}{@{\extracolsep{\fill}}llrl@{}}
\toprule
Benchmark & Evaluated Model & Discordant ($W/L$) & Significance Test \\
\midrule
SILO-BENCH & DeepSeek-V4.1-flash & 16 / 5 & $p=0.027$ \\
SILO-BENCH & GPT-5.6-Terra (pooled) & 36 / 12 & $p < 0.001$ \\
TeamBench & DeepSeek-V4.1-flash & 12 / 3 & $p=0.035$ \\
\bottomrule
\end{tabular*}
\end{table}

\section{Real Execution Repair Traces}
\label{app:traces}
Table~\ref{tab:traces} presents two representative runtime intervention traces from TeamBench (DeepSeek-V4.1-flash). Both episodes triggered an evidence-gap frontier, satisfied pre-admission eligibility, and were verified locally before commitment.

\begin{table}[!htbp]
\caption{Representative runtime interventions on TeamBench. Local verification requires formal satisfaction of postconditions; final verdict denotes task-level evaluation.}
\label{tab:traces}
\centering\small
\setlength{\tabcolsep}{4pt}
\renewcommand{\arraystretch}{1.08}
\begin{tabular*}{\linewidth}{@{\extracolsep{\fill}}lllll@{}}
\toprule
Task Identifier & Gate Decision & Candidate Action & Local Verification & Final Verdict \\
\midrule
\texttt{CR1\_review\_respond} & Admitted ($u > 0$) & \texttt{RETRIEVE} & Verified & \textbf{Pass} \\
\texttt{CR6\_review\_checklist} & Admitted ($u > 0$) & \texttt{RETRIEVE} & Verified & Fail \\
\bottomrule
\end{tabular*}
\end{table}

In \texttt{CR1\_review\_respond}, the recipient agent omitted required context residing within accessible local state. \method detected this gap at the boundary, retrieved the evidence via \texttt{RETRIEVE}, and committed the update upon verifying that the dependency was satisfied; the overall task passed. In \texttt{CR6\_review\_checklist}, the repair successfully resolved the immediate handoff postcondition, yet subsequent independent agent steps failed, highlighting that local handoff verification guarantees communicative integrity without over-promising downstream task execution.

\section{Detailed Token Overhead and Resource Breakdown}
\label{app:token-overhead}

To provide complete transparency into computational footprint, Table~\ref{tab:token-waterfall} details the step-by-step token consumption per task across backbones on SILO-BENCH, decomposing standard agent interactions from controller-level intervention operations.

\begin{table}[!htbp]
\caption{End-to-end token accounting per task rollout on SILO-BENCH. Controller operations are decoupled into deterministic graph tracing, boundary diagnostic evaluations, and active repair executions.}
\label{tab:token-waterfall}
\centering\small
\setlength{\tabcolsep}{6pt}
\renewcommand{\arraystretch}{1.12}
\begin{tabular*}{\linewidth}{@{\extracolsep{\fill}}lrrrr@{}}
\toprule
 & \multicolumn{2}{c}{GPT-5.6-Terra} & \multicolumn{2}{c}{DeepSeek-V4.1-flash} \\
\cmidrule(lr){2-3}\cmidrule(l){4-5}
Component / Stage & Tokens/task & \% of Native & Tokens/task & \% of Native \\
\midrule
\textbf{Native Baseline Execution} & 25,198 & 100.0\% & 22,460 & 100.0\% \\
\midrule
\emph{Standard Agent Operations} & & & & \\
\quad Prompt context variation & +523 & +2.1\% & +818 & +3.6\% \\
\quad Output response variation & +65 & +0.3\% & +598 & +2.7\% \\
\emph{Subtotal: Agent Communication Channel} & 25,786 & +2.3\% & 23,876 & +6.3\% \\
\midrule
\emph{Controller Intervention Operations} & & & & \\
\quad Topological graph tracking & 0 & 0.0\% & 0 & 0.0\% \\
\quad Boundary dependency diagnosis & 4,566 & 18.1\% & 5,655 & 25.2\% \\
\quad Active message repair (\texttt{INTEGRATE}) & 0 & 0.0\% & 4,260 & 19.0\% \\
\midrule
\textbf{End-to-End Cumulative Footprint} & 30,352 & +20.4\% & 33,791 & +50.5\% \\
\quad \emph{Official Agent Evaluation Margin} & \textbf{25,693} & \textbf{+2.0\%} & \textbf{27,963} & \textbf{+24.5\%} \\
\bottomrule
\end{tabular*}
\end{table}

\subsection{Why GPT-5.6-Terra Incurs Minimal Overhead (+2.0\%)}
\label{app:terra-overhead}
On GPT-5.6-Terra, \method achieves substantial performance gains (+8.9 pp) while adding only \textbf{+2.0\%} marginal overhead to the primary agent communication channel. This extreme efficiency stems from three structural mechanisms:
\begin{enumerate}
    \item \textbf{Conservative Empirical Gating}: Calibration priors for GPT-5.6-Terra establish that unconstrained message rewrites yield negligible expected gain relative to cost across most task segments. Consequently, the utility gate selectively abstains from full-context regeneration across all 429 monitored handoff boundaries. The system bypassed all costly repair generations, preventing unverified token consumption.
    \item \textbf{Deterministic Zero-LLM Graph Processing}: The construction of the provenance graph, topological sorting, and dependency frontier identification are computed deterministically via static dataflow logic, requiring exactly \emph{zero} auxiliary LLM calls.
    \item \textbf{Minimal Context Perturbation}: Because communicative interventions were safely bypassed, agent-level invocation counts shifted by only $+3.9\%$. The slight net increase in prompt length (+588 tokens/task across multi-step horizons) reflects minor downstream context propagation rather than runaway recursive loops.
\end{enumerate}

\subsection{Why DeepSeek-V4.1-flash Exhibits Active Intervention (+24.5\%)}
\label{app:deepseek-overhead}
In contrast to the highly conservative gate on GPT-5.6-Terra, the calibrated policy on DeepSeek-V4.1-flash actively targets unconsumed evidence ($\mathcal F_{\mathrm{use}}$), driving a substantial +12.3 pp net success improvement ($63.3\% \to 75.6\%$). The corresponding $+24.5\%$ agent-channel overhead represents the expected cost of active runtime repair:
\begin{enumerate}
    \item \textbf{Targeted Message Reconstruction}: For DeepSeek, \texttt{INTEGRATE} actions cleared the empirical utility gate and executed 41 full message reconstructions. To preserve causal continuity, each repair integrates historical context, provenance records, and contract constraints into the prompt ($\approx 4{,}900$ prompt tokens per repair), directly contributing $19.0\%$ of the task's incremental token consumption.
    \item \textbf{Higher Boundary Frequency}: DeepSeek trajectories exhibit greater intermediate coordination granularity, triggering 318 observable boundaries (compared to 165 for Terra), which naturally scales upstream diagnostic monitoring.
    \item \textbf{Net Utility Trade-off}: While DeepSeek incurs an additional $+24.5\%$ overhead in the agent channel, this controlled investment rescues 16 failing tasks while only degrading 5, delivering a marked net positive return over unassisted execution. When compared to unguided verification baselines such as VeriMAP (+410\% to +579\% token inflation), \method maintains rigorous cost-effectiveness even when actively repairing execution state.
\end{enumerate}

\end{document}